\newcommand{\commentout}[1]{}
\newcommand{\junk}[1]{}
\newcommand{\etal}{\emph{et al.}}
\newtheorem{theorem}{Theorem}
\newtheorem{lemma}{Lemma}
\newcommand{\cE}{\mathcal{E}}
\newcommand{\ccE}{\overline{\cE}}
\newcommand{\eps}{\varepsilon}
\newcommand{\realset}{\mathbb{R}}
\newcommand{\colvar}{\textsc{v}}
\newcommand{\rowvar}{\textsc{u}}
\newcommand{\abs}[1]{\left|#1\right|}
\newcommand{\ceils}[1]{\left\lceil#1\right\rceil}
\newcommand{\condE}[2]{\mathbb{E} \left[#1 \,\middle|\, #2\right]}
\newcommand{\E}[1]{\mathbb{E} \left[#1\right]}
\newcommand{\I}[1]{\mathds{1} \! \left\{#1\right\}}
\newcommand{\maxnorm}[1]{\left\|#1\right\|_\infty}
\newcommand{\normw}[2]{\left\|#1\right\|_{#2}}
\newcommand{\rnd}[1]{\mathbf{#1}}
\newcommand{\set}[1]{\left\{#1\right\}}
\newcommand{\transpose}{^\mathsf{\scriptscriptstyle T}}
\DeclareMathOperator*{\argmax}{arg\,max\,}
\let\det\relax
\DeclareMathOperator{\det}{det}
\DeclareMathOperator{\poly}{poly}
\mathchardef\mhyphen="2D
\newcommand{\lowrankelim}{{\tt LowRankElim}}
\newcommand{\ucb}{{\tt UCB1}}
\begin{document}

\title{Stochastic Low-Rank Bandits}

\author{Branislav Kveton \\
Adobe Research \\
San Jose, CA \\
\emph{kveton@adobe.com} \And
Csaba Szepesv\'ari \\
Department of Computing Science \\
University of Alberta \\
\emph{szepesva@cs.ualberta.ca} \And
Anup Rao \\
Adobe Research \\
San Jose, CA \\
\emph{anuprao@adobe.com} \And
Zheng Wen \\
Adobe Research \\
San Jose, CA \\
\emph{zwen@adobe.com} \And
Yasin Abbasi-Yadkori \\
Adobe Research \\
San Jose, CA \\
\emph{abbasiya@adobe.com} \And
S. Muthukrishnan \\
Department of Computer Science \\
Rutgers University \\
\emph{muthu@cs.rutgers.edu}}

\maketitle

\begin{abstract}
Many problems in computer vision and recommender systems involve low-rank matrices. In this work, we study the problem of finding the maximum entry of a stochastic low-rank matrix from sequential observations. At each step, a learning agent chooses pairs of row and column arms, and receives the noisy product of their latent values as a reward. The main challenge is that the latent values are unobserved. We identify a class of non-negative matrices whose maximum entry can be found statistically efficiently and propose an algorithm for finding them, which we call $\lowrankelim$. We derive a $O((K + L) \poly(d) \Delta^{-1} \log n)$ upper bound on its $n$-step regret, where $K$ is the number of rows, $L$ is the number of columns, $d$ is the rank of the matrix, and $\Delta$ is the minimum gap. The bound depends on other problem-specific constants that clearly do not depend $K L$. To the best of our knowledge, this is the first such result in the literature.
\end{abstract}


\section{Introduction}
\label{sec:introduction}

We study the problem of finding the maximum entry of a stochastic low-rank matrix from sequential observations. Many real-world problems, especially in recommender systems \cite{koren09matrix,ricci11introduction}, are known to have an approximately low-rank structure. Therefore, we believe that our problem has ample applications. For instance, consider a marketer who wants to design a campaign that maximizes the click-through rate (CTR). The actions of the marketer are pairs of products and user segments. Let the product and user segment be the row and column of a matrix, where each entry is the CTR of a given segment on a given product. Then the maximum entry of this matrix is the solution to our problem. This matrix is expected to be low rank because similar segments tend to react similarly to similar products.

We propose an online learning model for our motivating problem, which we call a \emph{stochastic low-rank bandit}. The learning agent interacts with our problem as follows. At time $t$, the agent chooses pairs of row and column arms, and receives the noisy product of their latent values as a reward. The main challenge of our problem is that the latent values are not revealed. The goal of the agent is to maximize its expected cumulative reward, or equivalently to minimize its expected cumulative regret with respect to the most rewarding solution in hindsight.

We make three major contributions. First, we formulate the online learning problem of \emph{stochastic low-rank bandits}, on a class of non-negative rank-$d$ matrices that can be solved statistically efficiently. Second, we design an elimination algorithm, $\lowrankelim$, for solving it. The key idea in $\lowrankelim$ is to explore all remaining row and column $d$-subsets randomly over all remaining column and row $d$-subsets, respectively, to estimate their expected rewards; and then eliminate suboptimal $d$-subsets. Our algorithm is computationally and sample efficient when the rank is small, such as $d \leq 4$. Third, we derive a $O((K + L) \poly(d) \Delta^{-1} \log n)$ gap-dependent upper bound on the $n$-step regret of $\lowrankelim$, where $K$ is the number of rows, $L$ is the number of columns, $d$ is the rank of the matrix, and $\Delta$ is the minimum of the row and column gaps. This result is stated in \cref{thm:upper bound} in \cref{sec:analysis}. The bound also depends on problem-specific constants that clearly do not depend on $K L$. One of our main contributions is that we identify the right notion of the gap.

We denote random variables by boldface letters and define $[n] = \set{1, \dots, n}$. For any two sets $A$ and $B$, we denote by $A^B$ the set of all vectors whose entries are indexed by $B$ and take values from $A$. Let $\Pi_k(A)$ be the \emph{set of all $k$-subsets} of set $A$. Let $M \in [0, 1]^{K \times L}$ be any matrix. Then we denote by $M(I, :)$ its submatrix of $k$ rows $I \in [K]^k$, by $M(:, J)$ its submatrix of $\ell$ columns $J \in [L]^\ell$; and by $M(I, J)$ its submatrix of rows $I$ and columns $J$. When $I$ and $J$ are sets, we assume that the rows and columns of $M$ are ordered in any fixed order, such ascending. We denote by $\mathcal{S}_d$ the set of points in the standard $d$-dimensional simplex, $\mathcal{S}_d = \set{v \in [0, 1]^d: \normw{v}{1} \leq 1}$; and by $\mathcal{S}_{n, d}$ the set of $n \times d$ matrices whose rows are from $\mathcal{S}_d$, $\mathcal{S}_{n, d} = \set{M \in [0, 1]^{n \times d}: M(i, :) \in \mathcal{S}_d \text{ for all } i \in [n]}$.


\section{Setting}
\label{sec:setting}

We formulate our learning problem as a \emph{stochastic low-rank bandit}. An instance of this problem is defined by a tuple $(U, V, P)$, where $U \in \mathcal{S}_{K, d}$ are \emph{latent row factors}, $V \in \mathcal{S}_{L, d}$ are \emph{latent column factors}, $K$ is the number of rows, $L$ is the number of columns, $d \ll \min \set{K, L}$ is the rank of $\bar{R} = U V\transpose \in [0, 1]^{K \times L}$, and $P$ is a distribution over the entries of $\bar{R}$. We assume that the stochastic reward of arm $(i, j)$ at time $t$, $\rnd{r}_t(i, j) \in [0, 1]$, satisfies $\E{\rnd{r}_t(i, j)} = \bar{R}(i, j)$. Let
\begin{align}
  \textstyle
  (i^\ast, j^\ast) = \argmax_{(i, j) \in [K] \times [L]} \bar{R}(i, j)
  \label{eq:maximum entry}
\end{align}
be the maximum entry of $\bar{R}$. The problem of learning $(i^\ast, j^\ast)$ from noisy observations of $\bar{R}$ is challenging, in the sense that no statistically-efficient learning algorithm exists for solving all instances of this problem (\cref{sec:related work}). In this work, we make two assumptions that allow us to make progress towards statistical efficiency.

\subsection{Hott Topics}
\label{sec:hott topics}

Our first key assumption is that $\bar{R}$ is a hott topics matrix \cite{recht12factoring}. Specifically, we assume that there exist $d$ \emph{base row factors}, $U(I^\ast, :)$ for some $I^\ast \in \Pi_d([K])$, such that all rows of $U$ can be written as a convex combination of the rows of $U(I^\ast, :)$ and the zero vector; and that there exist $d$ \emph{base column factors}, $V(J^\ast, :)$ for some $J^\ast \in \Pi_d([L])$, such that all rows of $V$ can be written as a convex combination of the rows of $V(J^\ast, :)$ and the zero vector. Without loss of generality, we assume that $I^\ast = J^\ast = [d]$; and denote the corresponding row and column factors by $U^\ast = U(I^\ast, :)$ and $V^\ast = V(J^\ast, :)$, respectively.

\begin{figure*}[t]
  \includegraphics[viewport=0.5in 0.5in 8in 2.5in,width=5in]{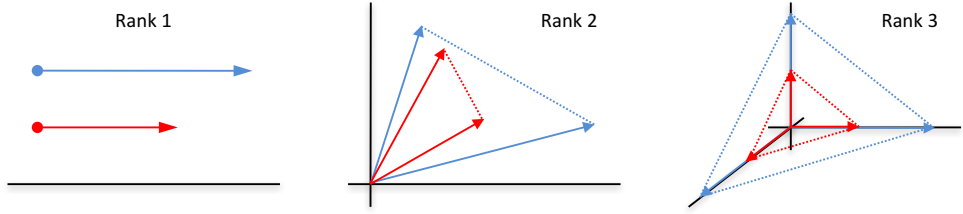}
  \caption{Visualization of optimal (blue) and suboptimal (red) $d$-rows in up to three dimensions. The vectors are individual rows in a $d$-row. The volume corresponding to the $d$-row is marked with dotted lines.}
  \label{fig:latent space}
\end{figure*}

Based on our assumption, $(i^\ast, j^\ast) \in I^\ast \times J^\ast$. The claim that $i^\ast \in I^\ast$ follows from the observation that for any column $j$,
\begin{align*}
  \max_{i \in [K]} U(i, :) V(j, :)\transpose \leq
  \max_{z \in \mathcal{S}_d} z U^\ast V(j, :)\transpose =
  \max_{k \in [d]} U^\ast(k, :) V(j, :)\transpose\,.
\end{align*}
The claim that $j^\ast \in J^\ast$ is proved analogously.

\subsection{Simplified Problem}
\label{sec:simplified problem}

Our second key assumption is that we study a related problem to \eqref{eq:maximum entry}, learning of $(I^\ast, J^\ast)$. When $(I^\ast, J^\ast)$ is known, learning of $(i^\ast, j^\ast)$ is a problem with $d^2$ arms, which is small in comparison to our original problem with $K L$ arms. The learning agent interacts with our new problem as follows. At time $t$, the agent chooses \emph{arm} $(\rnd{I}_t, \rnd{J}_t) \in \Pi_d([K]) \times \Pi_d([L])$, a pair of $d$-subsets of rows and columns, and \emph{observes} a noisy realization of matrix $\bar{R}(\rnd{I}_t, \rnd{J}_t)$, $\rnd{r}_t(i, j)$ for all $(i, j) \in \rnd{I}_t \times \rnd{J}_t$. The \emph{reward} is $\rnd{r}_t(i^\ast(\rnd{I}_t, \rnd{J}_t), j^\ast(\rnd{I}_t, \rnd{J}_t))$, where
\begin{align*}
  (i^\ast(I, J), j^\ast(I, J)) = \argmax_{(i, j) \in I \times J} \bar{R}(i, j)
\end{align*}
for any $(I, J) \in \Pi_d([K]) \times \Pi_d([L])$. To simplify language, we refer to the $d$-subsets of rows and columns as a \emph{$d$-row} and \emph{$d$-column}, respectively.

The objective of the learning agent is to minimize its \emph{expected cumulative regret} in $n$ steps $\mathcal{R}(n) = \E{\sum_{t = 1}^n \mathcal{R}(\rnd{I}_t, \rnd{J}_t)}$, where $\mathcal{R}(\rnd{I}_t, \rnd{J}_t) = \rnd{r}_t(i^\ast, j^\ast) - \rnd{r}_t(i^\ast(\rnd{I}_t, \rnd{J}_t), j^\ast(\rnd{I}_t, \rnd{J}_t))$ is the \emph{instantaneous stochastic regret} of the agent at time $t$.\footnote{Our regret bound in \cref{thm:upper bound} also holds for $\mathcal{R}(\rnd{I}_t, \rnd{J}_t) = \sum_{i, j = 1}^d \rnd{r}_t(i, j) - \sum_{(i, j) \in \rnd{I}_t \times \rnd{J}_t} \rnd{r}_t(i, j)$. This is another natural definition of the regret. The proof changes only in the first inequality in \cref{sec:regret decomposition lemma}.}


\section{Noise-Free Problem}
\label{sec:noise-free problem}

This section shows that the problem of finding the maximum entry of a noise-free low-rank matrix can be viewed as an elimination problem. We focus on row elimination. The column elimination is analogous. We start with rank-$1$ matrices. The maximum entry of a non-negative rank-$1$ matrix is in the row with the highest latent value \cite{katariya17stochastic}. Therefore, row $i$ can be eliminated by row $i'$ when $U(i, 1) < U(i', 1)$, when the length of $U(i, :)$ is lower than the length of $U(i', :)$ (\cref{fig:latent space}a).

A natural generalization of the length in a one-dimensional space is the area in a two-dimensional space. Therefore, in our class of rank-$2$ matrices, a pair of rows $I \in \Pi_2([K])$ can be eliminated by a pair of rows $I' \in \Pi_2([K])$ when the simplex over the rows of $U(I, :)$ has a smaller area than that over the rows of $U(I', :)$, as shown in \cref{fig:latent space}b. This follows from our assumption that any $U(I, :)$ can be written as $U(I, :) = Z U^\ast$ for some $Z \in \mathcal{S}_{d, d}$.

Generally, in any rank-$d$ matrix in our class of matrices, a $d$-row $I$ can be eliminated by a $d$-row $I'$ when the simplex over the rows of $U(I, :)$ has a smaller volume than that over the rows of $U(I', :)$, as shown in \cref{fig:latent space}c. The volume of the simplex over the rows of $U(I, :)$ is $d!^{-1} \abs{\det(U(I, :))}$. In the rest of this work, we neglect the factor of $d!$. This has no impact on elimination because this factor is common among all simplex volumes.

Unfortunately, the above approach cannot be implemented because $U$ is not observed, as we only observe the entries of $\bar R$. Therefore, we estimate $\abs{\det(U(I, :))}$ from $\bar{R}(I, J)$, where $\bar{R}(I, J)$ are the observations of $d$-row $I \in \Pi_d([K])$ over $d$-column $J \in \Pi_d([L])$. In particular, from the definition of $\bar{R}$ and the properties of the determinant,
\begin{align*}
  \det(\bar{R}(I, J)) =
  \det(U(I, :) V(J, :)\transpose) =
  \det(U(I, :)) \det(V(J, :))
\end{align*}
for any $I \in \Pi_d([K])$ and $J \in \Pi_d([L])$. This implies that $\det(\bar{R}(I, J))$ can be viewed as a scaled observation of $\det(U(I, :))$; and that
\begin{align*}
  \det^2(\bar{R}(I, J)) < \det^2(\bar{R}(I', J)) \implies
  \det^2(U(I, :)) < \det^2(U(I', :))
\end{align*}
for any $d$-rows $I$ and $I'$, as long as $\det^2(V(J, :)) > 0$.

\begin{algorithm}[t]
  \caption{Finding the maximum entry of a noise-free rank-$d$ matrix.}
  \label{alg:noise-free max}
  \begin{algorithmic}[1]
    \State Choose any $d$-column, $J_1 \in \Pi_d([L])$, and observe it in all rows
    \State Choose any $d$-row, $I_1 \in \Pi_d([K])$, and observe it in all columns
    \Statex
    \State $I^\ast \gets [d]$ \Comment{Row elimination}
    \ForAll{$I \in \Pi_d([K])$}
      \If{$\det^2(\bar{R}(I, J_1)) > \det^2(\bar{R}(I^\ast, J_1))$}
        $I^\ast \gets I$
      \EndIf
    \EndFor
    \State $J^\ast \gets [d]$ \Comment{Column elimination}
    \ForAll{$J \in \Pi_d([L])$}
      \If{$\det^2(\bar{R}(I_1, J)) > \det^2(\bar{R}(I_1, J^\ast))$}
        $J^\ast \gets J$
      \EndIf
    \EndFor
  \end{algorithmic}
\end{algorithm}

The above reasoning leads to a particularly simple algorithm for solving the noise-free variant of our problem, which is presented in \cref{alg:noise-free max}. The algorithm is guaranteed to identify $(I^\ast, J^\ast)$ under the assumption that the \emph{minimum volume}
\begin{align}
  c_{\min} = \min \set{\min_{I \in \Pi_d([K])} \det^2(U(I, :)), \ \min_{J \in \Pi_d([L])} \det^2(V(J, :))}
  \label{eq:minimum volume}
\end{align}
is positive. This means that any $d$ rows and columns of $\bar{R}$ are linearly independent. We discuss how to alleviate the dependence on $c_{\min}$ in \cref{sec:discussion}.



\section{Noisy Problem}
\label{sec:noisy problem}

\cref{alg:noise-free max} is expected to perform poorly in the noisy setting. The challenge is that a single noisy realization of $\bar{R}(:, J_1)$ and $\bar{R}(I_1, :)$ is unlikely to be sufficient to learn $I^\ast$ and $J^\ast$. This issue can be addressed by observing multiple noisy realizations of $\bar{R}(:, J_1)$ and $\bar{R}(I_1, :)$, and then acting on their empirical averages. This approach is problematic for two reasons. First and foremost, when $I_1$ and $J_1$ are chosen poorly, $\det^2(U(I_1, :))$ and $\det^2(V(J_1, :))$ are close to zero, and many observations are needed to learn $I^\ast$ and $J^\ast$. Second, it is wasteful in the sense that some $d$-rows and $d$-columns can be detected as suboptimal from much less observations than the others. We propose an adaptive elimination algorithm that addresses these challenges in the next section.

\subsection{Algorithm $\lowrankelim$}
\label{sec:algorithm}

We propose an elimination algorithm \cite{auer10ucb} for finding the maximum entry of a noisy low-rank matrix, which maintains $\ucb$ confidence intervals \cite{auer02finitetime} on the scaled volumes of all $d$-rows and $d$-columns. The algorithm is presented in \cref{alg:noisy max} and we call it $\lowrankelim$. The algorithm operates in stages, which quadruple in length. In each stage, $\lowrankelim$ explores all remaining rows and columns randomly over all remaining $d$-columns and $d$-rows, respectively. At the end of the stage, it eliminates all $d$-rows and $d$-columns that cannot be optimal with a high probability. We denote the remaining $d$-rows and $d$-columns in stage $\ell$ by $\rnd{A}^\rowvar_\ell$ and $\rnd{A}^\colvar_\ell$, respectively. The row and column variables are distinguished by their upper indices, which are $\rowvar$ and $\colvar$, respectively.

\begin{algorithm}[t]
  \caption{$\lowrankelim$ for finding the maximum entry of a noisy rank-$d$ matrix.}
  \label{alg:noisy max}
  \begin{algorithmic}[1]
    \State $\tilde{\Delta}_0 \gets 1$, \ $\rnd{A}^\rowvar_0 \gets \Pi_d([K])$, \ $\rnd{A}^\colvar_0 \gets \Pi_d([L])$
    \Comment{Initialization}
    \Statex 
    \For{$\ell = 0, 1, \dots$}
      \State $n_\ell \gets \ceils{4 \tilde{\Delta}_\ell^{-2} C(n)}$
      \Statex
      \For{$t = 1, \dots, n_\ell$}
        \State Choose random $d$-row $\rnd{I}_t \in \rnd{A}^\rowvar_\ell$ and
        $d$-column $\rnd{J}_t \in \rnd{A}^\colvar_\ell$
        \For{$k = 1, 2$}
          \ForAll{$i \in \bigcup_{I \in \rnd{A}^\rowvar_\ell} I$}
          \Comment{Row exploration}
            \State Choose any $d$-row $I \in \rnd{A}^\rowvar_\ell$ such that $i \in I$
            \State Observe $d$-row $I$ over $\rnd{J}_t$ and store it in $\rnd{R}^\rowvar_{\ell, t, k}(I, :)$
          \EndFor
          \ForAll{$j \in \bigcup_{J \in \rnd{A}^\colvar_\ell} J$}
          \Comment{Column exploration}
            \State Choose any $d$-column $J \in \rnd{A}^\colvar_\ell$ such that $j \in J$
            \State Observe $d$-column $J$ over $\rnd{I}_t$ and store it in $\rnd{R}^\colvar_{\ell, t, k}(J, :)$
          \EndFor
        \EndFor
      \EndFor
      \Statex
      \State $\delta_\ell \gets \sqrt{C(n) n_\ell^{-1}}$
      \ForAll{$I \in \rnd{A}^\rowvar_\ell$}
      \Comment{UCBs and LCBs of all remaining $d$-rows}
        \State $\hat{\rnd{\mu}}^\rowvar_\ell(I) \gets n_\ell^{-1} \sum_{t = 1}^{n_\ell}
        \det(\rnd{R}^\rowvar_{\ell, t, 1}(I, :)) \det(\rnd{R}^\rowvar_{\ell, t, 2}(I, :))$
        \State $\rnd{U}^\rowvar_\ell(I) \gets \hat{\rnd{\mu}}^\rowvar_\ell(I) + \delta_\ell$, \
        $\rnd{L}^\rowvar_\ell(I) \gets \hat{\rnd{\mu}}^\rowvar_\ell(I) - \delta_\ell$
      \EndFor
      \ForAll{$J \in \rnd{A}^\colvar_\ell$}
      \Comment{UCBs and LCBs of all remaining $d$-columns}
        \State $\hat{\rnd{\mu}}^\colvar_\ell(J) \gets n_\ell^{-1} \sum_{t = 1}^{n_\ell}
        \det(\rnd{R}^\colvar_{\ell, t, 1}(J, :)) \det(\rnd{R}^\colvar_{\ell, t, 2}(J, :))$
        \State $\rnd{U}^\colvar_\ell(J) \gets \hat{\rnd{\mu}}^\colvar_\ell(J) + \delta_\ell$, \
        $\rnd{L}^\colvar_\ell(J) \gets \hat{\rnd{\mu}}^\colvar_\ell(J) - \delta_\ell$
      \EndFor
      \Statex
      \State $\rnd{A}^\rowvar_{\ell + 1} \gets \rnd{A}^\rowvar_\ell$, \
      $\rnd{I}^\ast_\ell \gets \argmax_{I \in \rnd{A}^\rowvar_\ell} \rnd{L}^\rowvar_\ell(I)$
      \Comment{$d$-row elimination}
      \ForAll{$I \in \rnd{A}^\rowvar_\ell$}
        \If{$\rnd{U}^\rowvar_\ell(I) \leq \rnd{L}^\rowvar_\ell(\rnd{I}^\ast_\ell)$}
          $\rnd{A}^\rowvar_{\ell + 1} \gets \rnd{A}^\rowvar_{\ell + 1} \setminus \set{I}$
        \EndIf
      \EndFor
      \State $\rnd{A}^\colvar_{\ell + 1} \gets \rnd{A}^\colvar_\ell$, \
      $\rnd{J}^\ast_\ell \gets \argmax_{J \in \rnd{A}^\colvar_\ell} \rnd{L}^\colvar_\ell(J)$
      \Comment{$d$-column elimination}
      \ForAll{$J \in \rnd{A}^\colvar_\ell$}
        \If{$\rnd{U}^\colvar_\ell(J) \leq \rnd{L}^\colvar_\ell(\rnd{J}^\ast_\ell)$}
          $\rnd{A}^\colvar_{\ell + 1} \gets \rnd{A}^\colvar_{\ell + 1} \setminus \set{J}$
        \EndIf
      \EndFor
      \Statex
      \State $\tilde{\Delta}_{\ell + 1} \gets \tilde{\Delta}_\ell / 2$
    \EndFor
  \end{algorithmic}
\end{algorithm}

Each stage of \cref{alg:noisy max} has three main steps: exploration, estimation, and elimination. In the exploration step (lines $4$--$12$), all remaining rows and columns are explored over $n_\ell$ random remaining $d$-columns and $d$-rows, respectively. The row and column observations are stored in matrices $\rnd{R}^\rowvar_{\ell, t, k} \in \realset^{K \times d}$ and $\rnd{R}^\colvar_{\ell, t, k} \in \realset^{L \times d}$, respectively. Therefore, the maximum number of observations in stage $\ell$ is $2 (K + L) d^2 n_\ell$. The separation of row and column observations is necessary to guarantee that the row and column estimators are scaled by the same factor, as in \cref{sec:noise-free problem}.

In the estimation step (lines $13$--$19$), $\lowrankelim$ estimates high-probability upper and lower confidence bounds on the scaled volumes of all remaining $d$-rows and $d$-columns. The scaled volume of $d$-row $I \in \rnd{A}^\rowvar_\ell$ is estimated as $\hat{\rnd{\mu}}^\rowvar_\ell(I)$ in line $13$. Since $\rnd{R}^\rowvar_{\ell, t, 1}(I, :)$ and $\rnd{R}^\rowvar_{\ell, t, 2}(I, :)$ are independent noisy observations of $\bar{R}(I, \rnd{J}_t)$, it is easy to show that
\begin{align*}
  \E{\hat{\rnd{\mu}}^\rowvar_\ell(I)} =
  \abs{\rnd{A}^\colvar_\ell}^{-1} \sum_{J \in \rnd{A}^\colvar_\ell} \det^2(\bar{R}(I, J))
\end{align*}
for any remaining $d$-columns $\rnd{A}^\colvar_\ell$. Also note that any realization of $\det(\rnd{R}^\rowvar_{\ell, t, k}(I, :))$ is reasonably bounded for small $d$. In particular, let $\det_{\max}(d) = \max_{M \in [0, 1]^{d \times d}} \det(M)$ be the maximum determinant of a $d \times d$ matrix on $[0, 1]$. Then $\abs{\det(\rnd{R}^\rowvar_{\ell, t, k}(I, :))} \leq \det_{\max}(d)$, where $\det_{\max}(d)$ is $1$, $1$, $2$, and $3$ when $d$ is $1$, $2$, $3$, and $4$, respectively. Therefore, when $d$ is small, we can argue that $\hat{\rnd{\mu}}^\rowvar_\ell(I)$ concentrates at $\bar{\rnd{\mu}}^\rowvar_\ell(I)$ by standard concentration inequalities for bounded i.i.d. random variables.

In the elimination step (lines $20$--$25$), $\lowrankelim$ eliminates suboptimal $d$-rows and $d$-columns. The confidence intervals are designed such that $\rnd{U}^\rowvar_\ell(I) \leq \rnd{L}^\rowvar_\ell(\rnd{I}^\ast_\ell)$ implies that $d$-row $I$ is suboptimal with a high probability for any column elimination policy up to the end of stage $\ell$, and $\rnd{U}^\colvar_\ell(J) \leq \rnd{L}^\colvar_\ell(\rnd{J}^\ast_\ell)$ implies that $d$-column $J$ is suboptimal with a high probability for any row elimination policy up to the end of stage $\ell$. As a result, all eliminations are correct with a high probability.

The computational complexity of the estimation and elimination steps (lines $13$--$25$) is exponential in $d$. Therefore, they can be implemented efficiently only for small $d$. The confidence radii depend on $\log n$ through
\begin{align}
  C(n) = 4 \det_{\max}^2(d) \log((K^d + L^d) n)\,.
  \label{eq:log n}
\end{align}
Since $K^d + L^d \leq (K + L)^d$, $C(n)$ is at most linear in $d$.


\section{Analysis}
\label{sec:analysis}

This section has three parts. In \cref{sec:upper bound}, we present a gap-dependent upper bound on the $n$-step regret of $\lowrankelim$. In \cref{sec:key lemmas}, we state our key lemmas and sketch their proofs. In \cref{sec:discussion}, we discuss the results of our analysis.

\subsection{Upper Bound}
\label{sec:upper bound}

$\lowrankelim$ seems to be a reasonable generalization of \cref{alg:noise-free max} to the noisy setting, where the scaled estimates of volumes are substituted with their upper and lower confidence bounds. As a result, it is expected that $\lowrankelim$ eliminates all suboptimal $d$-rows and $d$-columns as the number of stages increases, as long as all confidence bounds hold with a high probability. In this section, we derive a finite-time upper bound on the regret of $\lowrankelim$.

We measure the regret of $\lowrankelim$ by several metrics. Let $I \in \Pi_d([K])$ be suboptimal $d$-row and $J \in \Pi_d([L])$ be suboptimal $d$-column. Then the \emph{gaps} of $d$-row $I$ and $d$-column $J$,
\begin{align*}
  \Delta^\rowvar_I = \det^2(U(I^\ast, :)) - \det^2(U(I, :))\,, \quad
  \Delta^\colvar_J = \det^2(V(J^\ast, :)) - \det^2(V(J, :))\,,
\end{align*}
measure the hardness of eliminating $I$ and $J$ under the assumption that $U$ and $V$ are known. We define the \emph{minimum gap} as the minimum of the $d$-row and $d$-column gaps,
\begin{align}
  \Delta_{\min} =
  \min \set{\min_{I \in \Pi_d([K]) \setminus \set{I^\ast}} \Delta^\rowvar_I, \
  \min_{J \in \Pi_d([L]) \setminus \set{J^\ast}} \Delta^\rowvar_J}\,.
  \label{eq:minimum gap}
\end{align}
However, $U$ and $V$ are not known, and therefore $\lowrankelim$ estimates scaled volumes of $d$-rows and $d$-columns. The penalty for estimating scaled volumes is reflected by the \emph{minimum volume} $c_{\min}$ in \eqref{eq:minimum volume} and the \emph{maximum volume}
\begin{align}
  c_{\max} = \min \set{\det^2(U^\ast), \ \det^2(V^\ast)}\,.
  \label{eq:maximum volume}
\end{align}
Note that $c_{\min} > 0$ implies that any $d$ rows and columns of $\bar{R}$ are linearly independent. We discuss how to eliminate the dependence on $c_{\min}$ in \cref{sec:discussion}. Our main theorem is stated below.

\begin{theorem}
\label{thm:upper bound} The expected $n$-step regret of $\lowrankelim$ is bounded as
\begin{align*}
  \mathcal{R}(n) \leq \frac{c d^3 (K + L)}{c_{\max} c_{\min}^2 \Delta_{\min}} C(n) + 4\,,
\end{align*}
where $c = 3072$, $c_{\max}$ is defined in \eqref{eq:maximum volume}, $c_{\min}$ is defined in \eqref{eq:minimum volume}, $\Delta_{\min}$ is defined in \eqref{eq:minimum gap}, and $C(n)$ is defined in \eqref{eq:log n}.
\end{theorem}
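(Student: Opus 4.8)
The plan is to control the regret on a high-probability \emph{good event} on which every confidence interval is valid, and to bound the residual contribution of its complement separately. First I would set up concentration. Fix a stage $\ell$ and a surviving $d$-row $I \in \rnd{A}^\rowvar_\ell$. Because the two observation matrices $\rnd{R}^\rowvar_{\ell, t, 1}(I, :)$ and $\rnd{R}^\rowvar_{\ell, t, 2}(I, :)$ are \emph{independent} given $\rnd{J}_t$, and the determinant is multilinear in its rows (which are observed through independent noise), one gets $\condE{\det(\rnd{R}^\rowvar_{\ell, t, 1}(I, :)) \det(\rnd{R}^\rowvar_{\ell, t, 2}(I, :))}{\rnd{J}_t} = \det^2(\bar{R}(I, \rnd{J}_t))$; this is exactly why two independent copies are used rather than squaring one. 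Averaging over the uniform draw of $\rnd{J}_t$ from $\rnd{A}^\colvar_\ell$ yields $\E{\hat{\rnd{\mu}}^\rowvar_\ell(I)} = \bar{\rnd{\mu}}^\rowvar_\ell(I) = \det^2(U(I, :)) \, \bar{\rnd{c}}^\colvar_\ell$, where $\bar{\rnd{c}}^\colvar_\ell = \abs{\rnd{A}^\colvar_\ell}^{-1} \sum_{J \in \rnd{A}^\colvar_\ell} \det^2(V(J, :)) \in [c_{\min}, \det^2(V^\ast)]$ is a common positive scale shared by all surviving $d$-rows. Since each summand lies in $[-\det_{\max}^2(d), \det_{\max}^2(d)]$, Hoeffding's inequality with the choice $\delta_\ell = \sqrt{C(n) / n_\ell}$ and the definition of $C(n)$ in \eqref{eq:log n} gives a per-subset, per-stage failure probability of order $((K^d + L^d) n)^{-2}$. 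A union bound over the at most $K^d + L^d$ subsets and over the stages then shows that the good event $\cE$, on which $\abs{\hat{\rnd{\mu}}^\rowvar_\ell(I) - \bar{\rnd{\mu}}^\rowvar_\ell(I)} \le \delta_\ell$ and its column analogue hold for all surviving subsets and stages, fails with probability $O(1/n)$; the regret on $\ccE$ is then at most a constant, the source of the additive $4$.

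Next, working on $\cE$, I would establish correctness and timing of elimination. Because $\bar{\rnd{c}}^\colvar_\ell$ is a common positive scale and $\det^2(U(I^\ast, :))$ is maximal, $I^\ast$ maximizes $\bar{\rnd{\mu}}^\rowvar_\ell$, so the standard interval argument shows $I^\ast$ is never eliminated and every eliminated $d$-row is genuinely suboptimal. For timing, the separation between the means is $\bar{\rnd{\mu}}^\rowvar_\ell(I^\ast) - \bar{\rnd{\mu}}^\rowvar_\ell(I) = \bar{\rnd{c}}^\colvar_\ell \Delta^\rowvar_I \ge c_{\min} \Delta^\rowvar_I$, so as soon as $4 \delta_\ell \le c_{\min} \Delta^\rowvar_I$ the upper bound of $I$ falls below the lower bound of the empirically best surviving $d$-row and $I$ is removed. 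Using $\delta_\ell \le \tilde{\Delta}_\ell / 2$ (from $n_\ell \ge 4 \tilde{\Delta}_\ell^{-2} C(n)$), any $d$-row still active in stage $\ell$ must satisfy $\tilde{\Delta}_\ell \gtrsim c_{\min} \Delta^\rowvar_I$, i.e.\ $\Delta^\rowvar_I \lesssim \tilde{\Delta}_\ell / c_{\min}$; the column case is symmetric.

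The crux is the regret decomposition. Each step of stage $\ell$ probes, for every surviving individual row $i$ and column $j$, a $d$-subset containing it, and the factor $K + L$ in the bound comes precisely from summing the per-play regret over the at most $K$ surviving rows and $L$ surviving columns probed per step. I would prove a lemma converting each reward gap $\bar{R}(i^\ast, j^\ast) - \max_{(i, j) \in I \times J} \bar{R}(i, j)$ into the corresponding volume gap: writing $U(I, :) = Z U^\ast$ with $Z \in \mathcal{S}_{d, d}$ gives $\det^2(U(I, :)) = \det^2(Z) \det^2(U^\ast)$, and the hott-topics convex-combination geometry lets one bound the reward shortfall from using $I$ in place of $I^\ast$ by $\Delta^\rowvar_I$ times a factor depending only on $d$ and the extreme volumes (scaling like $1/c_{\max}$ through the lower bound $\det^2(U^\ast) \ge c_{\max}$). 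Combining this with the timing bound $\Delta^\rowvar_I \lesssim \tilde{\Delta}_\ell / c_{\min}$, the expected regret accrued in stage $\ell$ is at most a $\poly(d)(K + L)/(c_{\max} c_{\min})$ multiple of $n_\ell \tilde{\Delta}_\ell \le 4 \tilde{\Delta}_\ell^{-1} C(n) + \tilde{\Delta}_\ell$. Finally I would sum over stages: the deepest stage any suboptimal subset can reach has $\tilde{\Delta}_\ell \gtrsim c_{\min} \Delta_{\min}$, the sum $\sum_\ell \tilde{\Delta}_\ell^{-1}$ is geometric and dominated by its last term of order $(c_{\min} \Delta_{\min})^{-1}$, and collecting constants yields the claimed $\mathcal{R}(n) \le c d^3 (K + L) (c_{\max} c_{\min}^2 \Delta_{\min})^{-1} C(n) + 4$.

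I expect the main obstacle to be the reward-gap-to-volume-gap conversion lemma: regret is measured in the observable scale $\bar{R} = U V\transpose$, whereas elimination is driven by the latent volumes $\det^2(U(I, :))$, and bridging the two with a prefactor that depends only on $d$, $c_{\max}$, and $c_{\min}$ and not on $K L$ is where the hott-topics structure and the linear-independence guarantee $c_{\min} > 0$ must be used in full; this is what the paper calls ``identifying the right notion of the gap.'' A secondary technical point is making the unbiasedness identity fully rigorous under the adaptive, stage-dependent sampling of $\rnd{J}_t$.
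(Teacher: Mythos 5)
Your proposal follows essentially the same route as the paper's proof: the same good-event decomposition with the additive $4$ from $n P(\ccE) \leq 4$, the same use of two independent observation copies to get an unbiased estimate of $\det^2(\bar{R}(I, \rnd{J}_t))$ and Hoeffding plus a union bound, the same elimination-timing argument via the factorization $\bar{\rnd{\mu}}^\rowvar_\ell(I) = \det^2(U(I,:))\,\bar{\rnd{c}}^\colvar_\ell$ with $\bar{\rnd{c}}^\colvar_\ell \geq c_{\min}$, the same reward-gap-to-volume-gap lemma with a $\poly(d)/c_{\max}$ prefactor, and the same geometric summation over stages dominated by the term of order $(c_{\min}\Delta_{\min})^{-1}$. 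You also correctly identify the conversion lemma (the paper's \cref{lem:regret decomposition}, resting on \cref{lem:simplex matching}) as the crux of the argument.
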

\begin{proof}
Let $m$ be the first stage such that all suboptimal $d$-rows and $d$-columns are eliminated by its end, $\tilde{\Delta}_m < c_{\min} \Delta_{\min} / 2$. Let $\mathcal{R}_\ell$ be the expected regret of $\lowrankelim$ in stage $\ell$ under event $\cE$. Then the expected $n$-step regret of $\lowrankelim$ is bounded as
\begin{align*}
  \mathcal{R}(n) \leq
  \E{\mathcal{R}(n) \I{\cE}} + n P(\ccE) \leq
  \E{\mathcal{R}(n) \I{\cE}} + 4 \leq
  \sum_{\ell = 0}^m \mathcal{R}_\ell + 4\,,
\end{align*}
where the second inequality is from $P(\ccE) \leq 4 n^{-1}$ (\cref{lem:concentration}) and the last inequality holds because all suboptimal $d$-rows and $d$-columns are eliminated after stage $m$ (\cref{lem:elimination}).

When $d$-row $I$ or $d$-column $J$ is active in stage $\ell$, it has not been eliminated in the previous stages. Therefore, by \cref{lem:elimination}, $c_{\min} \Delta^\rowvar_I \leq 2 \tilde{\Delta}_{\ell - 1} = 4 \tilde{\Delta}_\ell$ and $c_{\min} \Delta^\colvar_J \leq 2 \tilde{\Delta}_{\ell - 1} = 4 \tilde{\Delta}_\ell$. Furthermore, by the design of exploration in $\lowrankelim$ (lines $4$--$12$), each remaining row and column is explored in some remaining $I$ and $J$, respectively, that contains it. Therefore, by the regret decomposition in \cref{lem:regret decomposition}, the $n$-step regret is bounded from above as
\begin{align*}
  \sum_{\ell = 0}^m \mathcal{R}_\ell \leq
  2 \sum_{\ell = 0}^m
  \frac{6 d^3 (4 \tilde{\Delta}_\ell + 4 \tilde{\Delta}_\ell)}{c_{\max} c_{\min}} (K + L) n_\ell =
  \frac{96 d^3 (K + L)}{c_{\max} c_{\min}} \sum_{\ell = 0}^m \tilde{\Delta}_\ell n_\ell\,.
\end{align*}
The additional factor of $2$ is because $\lowrankelim$ explores everything twice. Note that the above upper bound is only possible because $d$-rows and $d$-columns are eliminated simultaneously.

Now we express $n_\ell$ and note that $\tilde{\Delta}_m = \tilde{\Delta}_{m - 1} / 2 \geq c_{\min} \Delta_{\min} / 4$ from the definition of $m$,
\begin{align*}
  \sum_{\ell = 0}^m \tilde{\Delta}_\ell n_\ell \leq
  4 C(n) \sum_{\ell = 0}^m \frac{1}{\tilde{\Delta}_\ell} =
  \frac{4}{\tilde{\Delta}_m} C(n) \sum_{\ell = 0}^m 2^{- \ell} \leq
  \frac{32}{c_{\min} \Delta_{\min}} C(n)\,.
\end{align*}
Finally, we chain all above inequalities and get our main claim.
\end{proof}

\subsection{Key Lemmas}
\label{sec:key lemmas}

We state our key lemmas below, together with sketches of their proofs.

\begin{lemma}
\label{lem:concentration} Let
\begin{align*}
  \bar{\rnd{\mu}}^\rowvar_\ell(I) =
  \abs{\rnd{A}^\colvar_\ell}^{-1} \sum_{J \in \rnd{A}^\colvar_\ell} \det^2(\bar{R}(I, J))\,, \quad
  \bar{\rnd{\mu}}^\colvar_\ell(J) =
  \abs{\rnd{A}^\rowvar_\ell}^{-1} \sum_{I \in \rnd{A}^\rowvar_\ell} \det^2(\bar{R}(I, J))
\end{align*}
be the expected scaled volumes of $d$-row $I$ and $d$-column $J$ in stage $\ell$, and let
\begin{align*}
  \cE^\rowvar_{\ell, I} =
  \set{\bar{\rnd{\mu}}^\rowvar_\ell(I) \in [\rnd{L}^\rowvar_\ell(I), \rnd{U}^\rowvar_\ell(I)]}\,, \quad
  \cE^\colvar_{\ell, J} =
  \set{\bar{\rnd{\mu}}^\colvar_\ell(J) \in [\rnd{L}^\colvar_\ell(J), \rnd{U}^\colvar_\ell(J)]}
\end{align*}
be the events that the confidence intervals on these expected volumes hold. Let $\cE$ be the event that all confidence intervals hold and $\ccE$ be the complement of this event. Then $P(\ccE) \leq 4 n^{-1}$.
\end{lemma}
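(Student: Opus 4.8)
The plan is to reduce the statement to a single concentration inequality for one fixed $d$-row (or $d$-column) in one fixed stage, and then to recover $P(\ccE) \leq 4 n^{-1}$ by a union bound over arms and stages. By a union bound,
\[
P(\ccE) \leq \sum_{\ell} \sum_{I \in \rnd{A}^\rowvar_\ell} P\bigl(\overline{\cE^\rowvar_{\ell, I}}\bigr) + \sum_{\ell} \sum_{J \in \rnd{A}^\colvar_\ell} P\bigl(\overline{\cE^\colvar_{\ell, J}}\bigr)\,,
\]
so it suffices to control each term and to count how many there are. I will treat the row terms; the column terms are symmetric.

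Since the surviving sets $\rnd{A}^\rowvar_\ell$ and $\rnd{A}^\colvar_\ell$ are themselves random (they depend on past eliminations), the target mean $\bar{\rnd{\mu}}^\rowvar_\ell(I)$ is a random quantity, so I would first condition on the $\sigma$-algebra $\mathcal{H}_\ell$ generated by all observations up to the beginning of stage $\ell$. Given $\mathcal{H}_\ell$, the sets $\rnd{A}^\rowvar_\ell, \rnd{A}^\colvar_\ell$, the length $n_\ell$, the radius $\delta_\ell$, and the value $\bar{\rnd{\mu}}^\rowvar_\ell(I)$ are all fixed constants. The per-round summands $\rnd{X}_t = \det(\rnd{R}^\rowvar_{\ell, t, 1}(I, :)) \det(\rnd{R}^\rowvar_{\ell, t, 2}(I, :))$, $t = 1, \dots, n_\ell$, are then conditionally i.i.d., since each round $t$ draws a fresh uniform $\rnd{J}_t \in \rnd{A}^\colvar_\ell$ and two independent noisy copies of $\bar{R}(I, \rnd{J}_t)$.

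The crux is to show $\condE{\rnd{X}_t}{\mathcal{H}_\ell} = \bar{\rnd{\mu}}^\rowvar_\ell(I)$, which is precisely why $\lowrankelim$ records two independent observations ($k = 1, 2$) rather than squaring a single one. Conditioning further on $\rnd{J}_t$, the two copies are independent, so $\condE{\rnd{X}_t}{\rnd{J}_t, \mathcal{H}_\ell} = \bigl(\condE{\det(\rnd{R}^\rowvar_{\ell, t, 1}(I, :))}{\rnd{J}_t}\bigr)^2$. By the Leibniz expansion of the determinant, each monomial is a product of entries in distinct rows and columns, and these entries are independent with means given by $\bar{R}(I, \rnd{J}_t)$; hence $\condE{\det(\rnd{R}^\rowvar_{\ell, t, 1}(I, :))}{\rnd{J}_t} = \det(\bar{R}(I, \rnd{J}_t))$, and averaging over the uniform choice of $\rnd{J}_t \in \rnd{A}^\colvar_\ell$ yields $\bar{\rnd{\mu}}^\rowvar_\ell(I)$. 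Combined with the uniform bound $\abs{\rnd{X}_t} \leq \det_{\max}^2(d)$, Hoeffding's inequality applied conditionally on $\mathcal{H}_\ell$ gives $P(\overline{\cE^\rowvar_{\ell, I}} \mid \mathcal{H}_\ell) = P(\abs{\hat{\rnd{\mu}}^\rowvar_\ell(I) - \bar{\rnd{\mu}}^\rowvar_\ell(I)} > \delta_\ell \mid \mathcal{H}_\ell) \leq 2 \exp(- n_\ell \delta_\ell^2 / (2 \det_{\max}^4(d)))$, and taking expectations removes the conditioning.

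Finally I would calibrate and count. Since $n_\ell \delta_\ell^2 = C(n)$ by the definitions of $n_\ell$ and $\delta_\ell$, the constant in $C(n) = 4 \det_{\max}^2(d) \log((K^d + L^d) n)$ is chosen so that the per-arm, per-stage failure probability is at most $2 ((K^d + L^d) n)^{-2}$. There are at most $K^d$ surviving $d$-rows per stage and, because every stage lasts at least one round, at most $n$ stages within the horizon; the row contribution is therefore at most $n \cdot K^d \cdot 2 ((K^d + L^d) n)^{-2} \leq 2 n^{-1}$, and the symmetric column contribution is at most $2 n^{-1}$, giving $P(\ccE) \leq 4 n^{-1}$. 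The main obstacle is the middle step: one must argue that conditioning on $\mathcal{H}_\ell$ simultaneously freezes the data-dependent sets $\rnd{A}^\rowvar_\ell, \rnd{A}^\colvar_\ell$ (and hence the target $\bar{\rnd{\mu}}^\rowvar_\ell(I)$) while leaving the within-stage summands conditionally i.i.d., and that the two-sample product is unbiased for the squared determinant via multilinearity — the remaining Hoeffding and union-bound steps are routine.
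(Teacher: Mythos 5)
Your proposal is correct and takes essentially the same route as the paper's proof: unbiasedness of the two-sample determinant product via the tower rule and multilinearity (the paper's ``$\E{\det(\rnd{Z})} = \det(\E{\rnd{Z}})$ for independent entries''), boundedness by $\det_{\max}^2(d)$, Hoeffding's inequality per arm and stage, and a union bound over at most $n$ stages and $K^d + L^d$ arms. Your explicit conditioning on $\mathcal{H}_\ell$ to freeze the random sets $\rnd{A}^\rowvar_\ell, \rnd{A}^\colvar_\ell$ is a welcome clarification of a step the paper handles implicitly by ``fixing'' the remaining $d$-columns, and the remaining discrepancies are only in bookkeeping constants that the paper itself treats loosely.
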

\begin{proof}
First, we prove that $\E{\hat{\rnd{\mu}}^\rowvar_\ell(I)} = \bar{\rnd{\mu}}^\rowvar_\ell(I)$ and $\hat{\rnd{\mu}}^\rowvar_\ell(I) \in [0, \det_{\max}^2(d)]$ for any stage $\ell$, $d$-row $I \in \Pi_d([K])$, and remaining $d$-columns $\rnd{A}^\colvar_\ell$ in stage $\ell$. Therefore, we can argue that $\hat{\rnd{\mu}}^\rowvar_\ell(I)$ is close to $\bar{\rnd{\mu}}^\rowvar_\ell(I)$ by Hoeffding's inequality. The column argument is analogous. Finally, by the union bound, we argue that it is unlikely that $\hat{\rnd{\mu}}^\rowvar_\ell(I)$ and $\hat{\rnd{\mu}}^\colvar_\ell(J)$ are not close to $\bar{\rnd{\mu}}^\rowvar_\ell(I)$ and $\bar{\rnd{\mu}}^\colvar_\ell(J)$, respectively, in any stage $\ell$. The complete proof is in \cref{sec:concentration lemma}.
\end{proof}

\begin{lemma}
\label{lem:regret decomposition} Let $I$ and $J$ be any $d$-row and $d$-column, respectively. Then
\begin{align*}
  \bar{R}(i^\ast, j^\ast) - \bar{R}(i^\ast(I, J), j^\ast(I, J)) \leq
  6 d^3 \frac{\Delta^\rowvar_I + \Delta^\colvar_J}{c_{\max}}\,.
\end{align*}
\end{lemma}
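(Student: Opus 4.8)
The plan is to reduce the reward gap to the geometry of the latent factors and then invoke a ``determinant near its maximum implies near a permutation'' argument. By the hott topics assumption I can write $U(I, :) = Z U^\ast$ and $V(J, :) = W V^\ast$ for some $Z, W \in \mathcal{S}_{d, d}$, so that $\det^2(U(I, :)) = \det^2(Z) \det^2(U^\ast)$ and $\det^2(V(J, :)) = \det^2(W) \det^2(V^\ast)$. Writing $\gamma_Z = 1 - \det^2(Z)$ and $\gamma_W = 1 - \det^2(W)$, the gaps become $\Delta^\rowvar_I = \det^2(U^\ast) \gamma_Z$ and $\Delta^\colvar_J = \det^2(V^\ast) \gamma_W$, and since $c_{\max} \leq \min \set{\det^2(U^\ast), \det^2(V^\ast)}$ I obtain the clean bounds $\gamma_Z \leq \Delta^\rowvar_I / c_{\max}$ and $\gamma_W \leq \Delta^\colvar_J / c_{\max}$. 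Setting $G = U^\ast (V^\ast)\transpose \in [0, 1]^{d \times d}$, the rewards over $I \times J$ are $\inner{u_i}{v_j} = Z(i, :)\, G\, W(j, :)\transpose$, and since $(i^\ast, j^\ast) \in [d] \times [d]$ the target reward equals the largest entry $G_{i^\ast, j^\ast} = \max_{k, l} G_{kl}$.

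First I would dispose of the easy case: if $6 d^3 (\Delta^\rowvar_I + \Delta^\colvar_J) / c_{\max} \geq 1$ the claim is immediate, because the left-hand side is a difference of rewards in $[0, 1]$. So I may assume $\gamma_Z, \gamma_W < 1 / (6 d^3)$, which forces $Z$ and $W$ to have determinant very close to the maximal value $1$.

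The heart of the argument---and the step I expect to be the main obstacle---is to show that a matrix $Z \in \mathcal{S}_{d, d}$ with $\det^2(Z) \geq 1 - \gamma_Z$ for small $\gamma_Z$ must be close to a permutation matrix, row by row. The route I would take is: by Hadamard's inequality $\sqrt{1 - \gamma_Z} = \abs{\det(Z)} \leq \prod_k \normw{Z(k, :)}{2} \leq 1$, where the last step uses $\normw{Z(k, :)}{2} \leq \normw{Z(k, :)}{1} \leq 1$; hence every row satisfies $\normw{Z(k, :)}{2}^2 \geq 1 - \gamma_Z$. For a simplex vector $z$ one has $\normw{z}{2}^2 \leq (\max_m z_m) \normw{z}{1} \leq \max_m z_m$, so the largest entry of $Z(k, :)$, say in column $m(k)$, is at least $1 - \gamma_Z$, whence $\normw{Z(k, :) - e_{m(k)}}{1} \leq 2 \gamma_Z$. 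It then remains to check that $k \mapsto m(k)$ is a bijection: if some column $m_0$ were never the argmax, all of its entries would be at most $\gamma_Z$, and a cofactor expansion along that column, with Hadamard bounds $\leq 1$ on the $(d-1)$-minors, would give $\abs{\det(Z)} \leq d \gamma_Z < \sqrt{1 - \gamma_Z}$ whenever $\gamma_Z < 1 / (6 d^3)$, a contradiction. Thus $m(\cdot)$ is a permutation of $[d]$, so for \emph{every} vertex---in particular $e_{i^\ast}$, which is legitimate since $i^\ast \in I^\ast = [d]$---there is a row index $i_0 \in I$ with $\normw{Z(i_0, :) - e_{i^\ast}}{1} \leq 2 \gamma_Z$; the same argument yields a column index $j_0 \in J$ with $\normw{W(j_0, :) - e_{j^\ast}}{1} \leq 2 \gamma_W$.

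Finally I would bound the reward difference by a telescoping estimate. Writing $a = Z(i_0, :)$ and $b = W(j_0, :)$,
\begin{align*}
  G_{i^\ast, j^\ast} - a\, G\, b\transpose
  = (e_{i^\ast} - a)\, G\, e_{j^\ast}\transpose + a\, G\, (e_{j^\ast} - b)\transpose\,.
\end{align*}
Since $G$ has entries in $[0, 1]$, the column $G e_{j^\ast}\transpose$ has $\normw{\cdot}{\infty} \leq 1$, and since $a$ is a simplex vector the row $a G$ has $\normw{\cdot}{\infty} \leq 1$; applying $\abs{\inner{x}{y}} \leq \normw{x}{1} \normw{y}{\infty}$ to each term gives $G_{i^\ast, j^\ast} - a G b\transpose \leq \normw{e_{i^\ast} - a}{1} + \normw{e_{j^\ast} - b}{1} \leq 2 \gamma_Z + 2 \gamma_W$. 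As $\inner{u_{i_0}}{v_{j_0}} = a G b\transpose$ and $\bar{R}(i^\ast(I, J), j^\ast(I, J)) = \max_{i \in I, j \in J} \inner{u_i}{v_j} \geq a G b\transpose$, I conclude $\bar{R}(i^\ast, j^\ast) - \bar{R}(i^\ast(I, J), j^\ast(I, J)) \leq 2 (\Delta^\rowvar_I + \Delta^\colvar_J) / c_{\max}$, which is in fact stronger than the stated bound; the factor $6 d^3$ leaves ample slack to absorb any cruder constants one incurs in the determinant-to-permutation step. The only delicate point is the bijection argument, since it is precisely what forces the optimal vertex $e_{i^\ast}$ to be matched by some available row of $I$ (and $e_{j^\ast}$ by some column of $J$); the remaining estimates are routine.
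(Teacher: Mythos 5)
Your proof is correct, and while it follows the same overall skeleton as the paper's---factor $U(I, :) = Z U^\ast$ and $V(J, :) = W V^\ast$ via the hott-topics assumption, show that a simplex-row matrix with near-maximal determinant is close to a permutation matrix, and telescope the row and column errors---it differs in two ways worth recording. First, where the paper's \cref{lem:simplex matching} matches \emph{all} rows of $Z$ to vertices and pays $6 d^{3/2} (1 - \det^2(Z))$ in aggregate $\ell_2$ distance (absorbing the case $\abs{\det(Z)} \leq 5/6$ inside the lemma), you dispose of the large-gap case at the level of the final inequality and prove a per-row $\ell_1$ bound $\normw{Z(k, :) - e_{m(k)}}{1} \leq 2 (1 - \det^2(Z))$ with a dimension-free constant; your surjectivity argument (cofactor expansion down a column all of whose entries are small) is a clean alternative to the paper's Laplace expansion along a row, though both rest on the same Hadamard and H\"older ingredients. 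Second, because the reward is a maximum rather than a sum, you only need the single matched pair $(i_0, j_0)$ together with a H\"older telescoping, whereas the paper bounds the full double sum over $[d] \times [d]$ via Cauchy--Schwarz, which is where its extra $d^{3/2}$ factor enters. The net effect is that you prove the strictly stronger bound $2 (\Delta^\rowvar_I + \Delta^\colvar_J) / c_{\max}$, which would remove the $d^3$ factor from this lemma and improve the constant in \cref{thm:upper bound}. The one thing your single-pair shortcut does not immediately cover is the alternative sum-of-entries regret mentioned in the paper's footnote, but since you do establish the full bijection $m(\cdot)$, summing your per-row bounds over all $d^2$ pairs recovers that case at cost $2 d^2 (\gamma_Z + \gamma_W)$, still well within the stated bound.
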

\begin{proof}
First, we bound the regret from above by the differences in its row and column components, $U^\ast - U(I, :)$ and $V^\ast - V(J, :)$. Then we argue that $U^\ast - U(I, :)$ can be bounded as a function of $\det^2(U^\ast) - \det^2(U(I, :))$, which is proved in \cref{lem:simplex matching} in \cref{sec:technical lemmas}. The column argument is analogous. The complete proof is in \cref{sec:regret decomposition lemma}.
\end{proof}

\begin{lemma}
\label{lem:elimination} Let event $\cE$ happen and $m$ be the first stage where $\tilde{\Delta}_m < c_{\min} \Delta^\rowvar_I / 2$, where $c_{\min}$ is defined in \eqref{eq:minimum volume}. Then $d$-row $I$ is guaranteed to be eliminated by the end of stage $m$. Moreover, let $m$ be the first stage where $\tilde{\Delta}_m < c_{\min} \Delta^\colvar_J / 2$. Then $d$-column $J$ is guaranteed to be eliminated by the end of stage $m$.
\end{lemma}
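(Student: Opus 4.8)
The plan is to prove the $d$-row statement; the $d$-column statement follows verbatim after exchanging the roles of rows and columns. I work throughout on the event $\cE$. If $I$ was already eliminated before stage $m$ there is nothing to prove, so I may assume $I \in \rnd{A}^\rowvar_m$, and it then suffices to show that the elimination test $\rnd{U}^\rowvar_m(I) \le \rnd{L}^\rowvar_m(\rnd{I}^\ast_m)$ fires at the end of stage $m$.

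The first and most important step is to expose the common column scaling hidden in the expected scaled volumes. Since $\det(\bar{R}(I, J)) = \det(U(I, :)) \det(V(J, :))$, the column factor separates and
\[
  \bar{\rnd{\mu}}^\rowvar_\ell(I) = \det^2(U(I, :)) \, \bar{\rnd{w}}_\ell, \qquad
  \bar{\rnd{w}}_\ell = \abs{\rnd{A}^\colvar_\ell}^{-1} \sum_{J \in \rnd{A}^\colvar_\ell} \det^2(V(J, :)).
\]
The scalar $\bar{\rnd{w}}_\ell$ does not depend on $I$, so it leaves the relative order of the $d$-rows unchanged, and it is at least $c_{\min}$ because every $\det^2(V(J, :)) \ge c_{\min}$ by the definition in \eqref{eq:minimum volume}. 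Moreover, writing $U(I, :) = Z U^\ast$ with $Z \in \mathcal{S}_{d, d}$ gives $\det^2(U(I, :)) = \det^2(Z) \det^2(U^\ast) \le \det^2(U^\ast)$, since the rows of $Z$ lie in the standard simplex and hence $\abs{\det(Z)} \le 1$. Consequently $I^\ast$ maximizes $\bar{\rnd{\mu}}^\rowvar_\ell$, and the separation between the optimal and the target mean is
\[
  \bar{\rnd{\mu}}^\rowvar_\ell(I^\ast) - \bar{\rnd{\mu}}^\rowvar_\ell(I) = \bar{\rnd{w}}_\ell \, \Delta^\rowvar_I \ge c_{\min} \Delta^\rowvar_I.
\]

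Next I would record two routine facts. From $n_\ell = \ceils{4 \tilde{\Delta}_\ell^{-2} C(n)}$ and $\delta_\ell = \sqrt{C(n) n_\ell^{-1}}$ the confidence radius satisfies $\delta_\ell \le \tilde{\Delta}_\ell / 2$. And under $\cE$ the optimal $d$-row is never eliminated: by induction on $\ell$, if $I^\ast \in \rnd{A}^\rowvar_\ell$ then
\[
  \rnd{L}^\rowvar_\ell(\rnd{I}^\ast_\ell) \le \bar{\rnd{\mu}}^\rowvar_\ell(\rnd{I}^\ast_\ell) \le \bar{\rnd{\mu}}^\rowvar_\ell(I^\ast) \le \rnd{U}^\rowvar_\ell(I^\ast),
\]
where the outer bounds are the $\cE$ guarantees and the middle one is the maximality just established, so the test for $I^\ast$ cannot fire (a boundary tie is excluded by $\rnd{U}^\rowvar_\ell(I^\ast) > \rnd{L}^\rowvar_\ell(I^\ast)$, or else an equal-volume reference $d$-row survives in its place). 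Hence $I^\ast \in \rnd{A}^\rowvar_m$, so $\rnd{L}^\rowvar_m(\rnd{I}^\ast_m) \ge \rnd{L}^\rowvar_m(I^\ast)$.

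It remains to combine these bounds. On $\cE$,
\[
  \rnd{U}^\rowvar_m(I) \le \bar{\rnd{\mu}}^\rowvar_m(I) + 2 \delta_m, \qquad
  \rnd{L}^\rowvar_m(\rnd{I}^\ast_m) \ge \rnd{L}^\rowvar_m(I^\ast) \ge \bar{\rnd{\mu}}^\rowvar_m(I^\ast) - 2 \delta_m,
\]
so subtracting and inserting the separation from the second step gives
\[
  \rnd{L}^\rowvar_m(\rnd{I}^\ast_m) - \rnd{U}^\rowvar_m(I) \ge c_{\min} \Delta^\rowvar_I - 4 \delta_m \ge c_{\min} \Delta^\rowvar_I - 2 \tilde{\Delta}_m > 0,
\]
where the last step uses $\delta_m \le \tilde{\Delta}_m / 2$ and the hypothesis $\tilde{\Delta}_m < c_{\min} \Delta^\rowvar_I / 2$. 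Thus the test fires and $I$ is removed by the end of stage $m$. I expect the genuine content to sit entirely in the second step: it is the hott-topics factorization that lets the column scaling $\bar{\rnd{w}}_\ell$ be pulled out identically from every $d$-row estimator, which is what simultaneously makes $I^\ast$ the true maximizer and turns the estimator-space gap into $\bar{\rnd{w}}_\ell \Delta^\rowvar_I \ge c_{\min} \Delta^\rowvar_I$. After that, the elimination argument is the standard UCB computation, and the only care needed is the radius bound and the essentially cosmetic boundary case in the survival step.
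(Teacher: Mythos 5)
Your proof is correct and follows essentially the same route as the paper's: under $\cE$, combine $\delta_m \le \tilde{\Delta}_m/2$ with the separation $\bar{\rnd{\mu}}^\rowvar_m(I^\ast) - \bar{\rnd{\mu}}^\rowvar_m(I) \ge c_{\min}\Delta^\rowvar_I$ to show the elimination test fires. You additionally justify two points the paper only asserts or leaves implicit --- the factorization $\bar{\rnd{\mu}}^\rowvar_\ell(I) = \det^2(U(I,:))\,\bar{\rnd{w}}_\ell$ with $\bar{\rnd{w}}_\ell \ge c_{\min}$ independent of the surviving columns, and the survival of $I^\ast$ so that $\rnd{L}^\rowvar_m(\rnd{I}^\ast_m) \ge \rnd{L}^\rowvar_m(I^\ast)$ --- which makes your write-up a strictly more complete version of the same argument.
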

\begin{proof}
From the definition of our confidence intervals in $\lowrankelim$, $\rnd{U}^\rowvar_\ell(I) < \rnd{L}^\rowvar_\ell(I^\ast)$ happens when $2 \tilde{\Delta}_m \leq \bar{\rnd{\mu}}^\rowvar_\ell(I^\ast) - \bar{\rnd{\mu}}^\rowvar_\ell(I)$. Now note that $\bar{\rnd{\mu}}^\rowvar_\ell(I^\ast) - \bar{\rnd{\mu}}^\rowvar_\ell(I)$ is bounded from below by $c_{\min} \Delta^\rowvar_I$. The column argument is analogous. The complete proof is in \cref{sec:elimination lemma}.
\end{proof}


\subsection{Discussion}
\label{sec:discussion}

We derive a gap-dependent upper bound on the $n$-step regret of $\lowrankelim$ in \cref{thm:upper bound}. The bound does not depend on $K L$; is linear in the reciprocal of the minimum gap $\Delta_{\min}$ in \eqref{eq:minimum gap} and logarithmic in $n$ through $C(n)$ in \eqref{eq:log n}. To the best of our knowledge, $\lowrankelim$ is the first algorithm that achieves such regret. The polynomial dependence on rank $d$ is suboptimal and we believe that it can be reduced by a more elaborate analysis. The goal of our work is not to conduct such an analysis, but to demonstrate that these kinds of bounds are attainable by bandit algorithms.

Our regret bound also depends on the reciprocal of two problem-dependent quantities, $c_{\max}$ in \eqref{eq:maximum volume} and $c_{\min}$ in \eqref{eq:minimum volume}, which do not depend on $K$, $L$, $\Delta_{\min}$, and $n$. The maximum volume $c_{\max}$ arises in \cref{lem:simplex matching} in \cref{sec:technical lemmas}, which relates volume to regret. This quantity is not critical because it is unlikely to be small. In fact, $c_{\max} \geq c_{\min}$ by definition. The minimum volume $c_{\min}$ is the penalty for estimating scaled volumes of $d$-rows and $d$-columns, over random $d$-columns and $d$-rows, respectively. This is a form of averaging. Therefore, $c_{\min}$ is expected to be proportional to some notion of an average determinant, and not the minimum determinant as in \eqref{eq:minimum volume}.

In the rest of this section, we suggest a modification of $\lowrankelim$ whose regret scales much better with the minimum volume. The key idea is to follow Katariya \etal~\cite{katariya17stochastic} and slightly change the exploration step. The change is to choose the $d$-rows and $d$-columns in line $5$ of $\lowrankelim$ randomly from $\Pi_d([K])$ and $\Pi_d([L])$, respectively. If the chosen $d$-row is eliminated in an earlier stage, it is replaced with the $d$-row that eliminated it; or the $d$-row that eliminated the earlier eliminating $d$-row, and so on. The same strategy is applied to $d$-columns. The result is that the averaging penalty does not worsen with elimination. Then $c_{\min}$ in \cref{lem:elimination} can be substituted with $\bar{c} = \exp[\min \set{\bar{c}_\rowvar, \bar{c}_\colvar}]$, where
\begin{align*}
  \bar{c}_\rowvar =
  \abs{\Pi_d([K])}^{-1} \hspace{-0.1in} \sum_{I \in \Pi_d([K])} \hspace{-0.1in} \det^2(U(I, :))\,, \quad
  \bar{c}_\colvar =
  \abs{\Pi_d([L])}^{-1} \hspace{-0.1in} \sum_{J \in \Pi_d([L])} \hspace{-0.1in} \det^2(V(J, :))\,;
\end{align*}
and the expected $n$-step regret of $\lowrankelim$ becomes
\begin{align*}
  \mathcal{R}(n) \leq \frac{c d^3 (K + L)}{c_{\max} \bar{c}^2 \Delta_{\min}} C(n) + 4\,.
\end{align*}


\section{Related Work}
\label{sec:related work}

The closest related paper to our work are stochastic rank-$1$ bandits of Katariya \etal~\cite{katariya17stochastic}. This work can be viewed as a generalization of rank-$1$ bandits to a higher rank. Although our algorithm and analysis are motivated by Katariya \etal~\cite{katariya17stochastic}, our generalization is highly non-trivial. For instance, it is easy to see that the maximum entry of a non-negative rank-$1$ lies in its row and column with highest latent values. This is not true when the rank $d > 1$. Therefore, it may seem that the work of Katariya \etal~\cite{katariya17stochastic} cannot be generalized to a higher rank; and even if, it is unclear under what assumptions. We not only generalize this work, but also recover similar regret dependence.

Several papers studied various forms of low-rank matrix completion in the bandit setting. Zhao \etal~\cite{zhao13interactive} proposed a bandit algorithm for low-rank matrix completion, where the distribution over latent item factors is approximated by a point estimate. The algorithm is not analyzed. Kawale \etal~\cite{kawale15efficient} proposed a Thompson sampling algorithm for low-rank matrix completion, where the distribution over low-rank matrices is approximated by particle filtering. A computationally-inefficient variant of the algorithm has $O(\Delta^{-2} \log n)$ regret in rank-$1$ matrices. Sen \etal~\cite{sen17contextual} proposed an $\eps$-greedy algorithm for non-negative matrix completion. Its regret is $O(\Delta^{-2} \log n)$ and its analysis relies on a variant of the restricted isometry property, which may be hard to satisfy in practice. The following three papers studied clustering in the bandit setting, which is a form of a low-rank structure. Gentile \etal~\cite{gentile14online} clustered users based on their preferences, under the assumption that the features of items are known. Li \etal~\cite{li16collaborative} generalized this algorithm to the clustering of items. Maillard \etal~\cite{maillard14latent} studied a multi-armed bandit problem where the arms are partitioned into latent groups. All above papers also differ from our work in the setting. Our learning agent chooses both the row and column. In all above papers, the nature chooses the row.

Bhargava \etal~\cite{bhargava17active} studied active matrix completion of positive semi-definite matrices and discussed its applications to bandits. This work is not comparable to our paper because the classes of completed matrices are different.

Matrix recovery and completion have been studied extensively in both machine learning and statistics \cite{candes09exact,koren09matrix,candes10matrix,krishnamurth13lowrank}. A good recent review of the prior work is Davenport and Romberg \cite{davenport16overview}. The existing guarantees in noisy matrix completion are unsuitable for our setting because they are on $\|\bar{R} - \hat{R}\|_{F}$, where $\bar{R} \in [0, 1]^{K \times L}$ is the unobserved matrix and $\hat{R} \in [0, 1]^{L \times K}$ is its recovered approximation. For the sake of concreteness, suppose that the noise is $\mathcal{N}(0, 1)$. Then, by Theorem 7 in Candes and Plan \cite{candes10matrix}, $\|\bar{R} - \hat{R}\|_{F} \leq \sqrt{\max \set{K, L}}$ at best. This bound is not sufficient for our purpose, because the gap between the highest and second highest entries of $\bar{R}$ is by definition smaller than $1$. In fact, many entries of the matrix may need to be observed many times to learn its maximum entry, as this may not be possible from observing only a small portion of the matrix.


\section{Conclusions}
\label{sec:conclusions}

We propose an algorithm for finding the maximum entry of a class of stochastic low-rank matrices, which we call $\lowrankelim$. $\lowrankelim$ is computationally and sample efficient when the rank of the matrix is small. We derive a gap-dependent upper bound on the $n$-step regret of our algorithm that does not depend on $K L$, the product of the number of rows $K$ and columns $L$ in the matrix. The bound is linear in the reciprocal of the minimum gap and logarithmic in the number of steps $n$. Although such bounds have become common in many bandit problems, we are unaware of any such bound in stochastic low-rank matrix completion. To the best of our knowledge, this paper presents the first such result. Note that our bound is proved without making any incoherence assumption on matrices, as is common in matrix completion \cite{davenport16overview}. This clearly indicates that the problem of learning the maximum entry of a matrix is fundamentally different from matrix completion.

We leave open several questions of interest. The strongest assumption in our work is that any row of $U$ and $V$ can be written as a convex combination of $d$ base rows and columns, respectively. We believe that this assumption can be relaxed. In particular, under the assumption that all entries of $U$ and $V$ are non-negative, the maximum entry of $\bar{R} = U V\transpose$ at the vertices of the convex hulls over the rows of $U$ and $V$, respectively. These convex hulls are maximum volume convex objects in row and column latent spaces, similarly to $U^\ast$ and $V^\ast$ in \cref{sec:introduction}. Therefore, we believe that they can be learned, at least in theory, by a similar algorithm to $\lowrankelim$.

Another limitation of our work is the dependence on rank $d$. The polynomial dependence on $d$ in our regret bound (\cref{thm:upper bound}) is likely to be suboptimal, and we believe that it can be reduced by a more elaborate analysis. In addition, $\lowrankelim$ is not computationally efficient when $d$ is large. We believe that it can be implemented computationally efficiently because our class of matrices can be factored using linear programming \cite{recht12factoring}. Finally, $\lowrankelim$ is not sample efficient when $d$ is large. We believe that our algorithm can be implemented sample efficiently if the distributions of the determinant products in $\lowrankelim$ are sub-Gaussian in $\poly(d)$. This may be possible because the expectations of the determinant products is in $[0, 1]$.

\bibliographystyle{plain}
\bibliography{References}

\begin{thebibliography}{10}

\bibitem{auer02finitetime}
Peter Auer, Nicolo Cesa-Bianchi, and Paul Fischer.
\newblock Finite-time analysis of the multiarmed bandit problem.
\newblock {\em Machine Learning}, 47:235--256, 2002.

\bibitem{auer10ucb}
Peter Auer and Ronald Ortner.
\newblock {UCB} revisited: Improved regret bounds for the stochastic
  multi-armed bandit problem.
\newblock {\em Periodica Mathematica Hungarica}, 61(1-2):55--65, 2010.

\bibitem{bhargava17active}
Aniruddha Bhargava, Ravi Ganti, and Rob Nowak.
\newblock Active positive semidefinite matrix completion: Algorithms, theory
  and applications.
\newblock In {\em Proceedings of the 20th International Conference on
  Artificial Intelligence and Statistics}, 2017.

\bibitem{candes10matrix}
Emmanuel Candes and Yaniv Plan.
\newblock Matrix completion with noise.
\newblock {\em Proceedings of the {IEEE}}, 98(6):925--936, 2010.

\bibitem{candes09exact}
Emmanuel Candes and Benjamin Recht.
\newblock Exact matrix completion via convex optimization.
\newblock {\em Foundations of Computational Mathematics}, 9(6):717--772, 2009.

\bibitem{davenport16overview}
Mark Davenport and Justin Romberg.
\newblock An overview of low-rank matrix recovery from incomplete observations.
\newblock {\em IEEE Journal of Selected Topics in Signal Processing},
  10(4):608--622, 2016.

\bibitem{gentile14online}
Claudio Gentile, Shuai Li, and Giovanni Zappella.
\newblock Online clustering of bandits.
\newblock In {\em Proceedings of the 31st International Conference on Machine
  Learning}, pages 757--765, 2014.

\bibitem{katariya17stochastic}
Sumeet Katariya, Branislav Kveton, Csaba Szepesvari, Claire Vernade, and Zheng
  Wen.
\newblock Stochastic rank-1 bandits.
\newblock In {\em Proceedings of the 20th International Conference on
  Artificial Intelligence and Statistics}, 2017.

\bibitem{kawale15efficient}
Jaya Kawale, Hung Bui, Branislav Kveton, Long Tran-Thanh, and Sanjay Chawla.
\newblock Efficient {Thompson} sampling for online matrix-factorization
  recommendation.
\newblock In {\em Advances in Neural Information Processing Systems 28}, pages
  1297--1305, 2015.

\bibitem{koren09matrix}
Yehuda Koren, Robert Bell, and Chris Volinsky.
\newblock Matrix factorization techniques for recommender systems.
\newblock {\em IEEE Computer}, 42(8):30--37, 2009.

\bibitem{krishnamurth13lowrank}
Akshay Krishnamurthy and Aarti Singh.
\newblock Low-rank matrix and tensor completion via adaptive sampling.
\newblock In {\em Advances in Neural Information Processing Systems 26}, pages
  836--844, 2013.

\bibitem{li16collaborative}
Shuai Li, Alexandros Karatzoglou, and Claudio Gentile.
\newblock Collaborative filtering bandits.
\newblock In {\em Proceedings of the 39th Annual International ACM SIGIR
  Conference}, 2016.

\bibitem{maillard14latent}
Odalric-Ambrym Maillard and Shie Mannor.
\newblock Latent bandits.
\newblock In {\em Proceedings of the 31st International Conference on Machine
  Learning}, pages 136--144, 2014.

\bibitem{recht12factoring}
Ben Recht, Christopher Re, Joel Tropp, and Bittorf Victor.
\newblock Factoring nonnegative matrices with linear programs.
\newblock In {\em Advances in Neural Information Processing Systems 25}, pages
  1214--1222, 2012.

\bibitem{ricci11introduction}
Francesco Ricci, Lior Rokach, and Bracha Shapira.
\newblock Introduction to recommender systems handbook.
\newblock In {\em Recommender Systems Handbook}, pages 1--35. 2011.

\bibitem{sen17contextual}
Rajat Sen, Karthikeyan Shanmugam, Murat Kocaoglu, Alex Dimakis, and Sanjay
  Shakkottai.
\newblock Contextual bandits with latent confounders: An {NMF} approach.
\newblock In {\em Proceedings of the 20th International Conference on
  Artificial Intelligence and Statistics}, 2017.

\bibitem{zhao13interactive}
Xiaoxue Zhao, Weinan Zhang, and Jun Wang.
\newblock Interactive collaborative filtering.
\newblock In {\em Proceedings of the 22nd ACM International Conference on
  Information and Knowledge Management}, pages 1411--1420, 2013.

\end{thebibliography}


\clearpage
\onecolumn
\appendix

\section{Proof of \cref{lem:concentration}}
\label{sec:concentration lemma}

Fix any stage $\ell$, $d$-row $I \in \Pi_d([K])$, and remaining $d$-columns $\rnd{A}^\colvar_\ell$ in stage $\ell$. Then
\begin{align*}
  \det(\rnd{R}^\rowvar_{\ell, t, 1}(I, :)) \det(\rnd{R}^\rowvar_{\ell, t, 2}(I, :))
\end{align*}
is an i.i.d. random variable in $t$ with two properties. First, any of its realizations is bounded as
\begin{align*}
  \det(\rnd{R}^\rowvar_{\ell, t, 1}(I, :)) \det(\rnd{R}^\rowvar_{\ell, t, 2}(I, :)) \leq
  \det_{\max}^2(d)
\end{align*}
because both $\rnd{R}^\rowvar_{\ell, t, 1}(I, :)$ and $\rnd{R}^\rowvar_{\ell, t, 2}(I, :)$ are random $d \times d$ matrices on $[0, 1]$. Second,
\begin{align*}
  \E{\det(\rnd{R}^\rowvar_{\ell, t, 1}(I, :)) \det(\rnd{R}^\rowvar_{\ell, t, 2}(I, :))}
  & = \E{\condE{\det(\rnd{R}^\rowvar_{\ell, t, 1}(I, :)) \det(\rnd{R}^\rowvar_{\ell, t, 2}(I, :))}{\rnd{J}_t}} \\
  & = \E{\condE{\det(\rnd{R}^\rowvar_{\ell, t, 1}(I, :))}{\rnd{J}_t}
  \condE{\det(\rnd{R}^\rowvar_{\ell, t, 2}(I, :))}{\rnd{J}_t}} \\
  & = \E{\det^2(\bar{R}(I, \rnd{J}_t)} \\
  & = \abs{\rnd{A}^\colvar_\ell}^{-1} \sum_{J \in \rnd{A}^\colvar_\ell} \det^2(\bar{R}(I, J)) \\
  & = \bar{\rnd{\mu}}^\rowvar_\ell(I)\,,
\end{align*}
where the first equality is from the tower rule, the second equality is from two independent observations of $\bar{R}(I, \rnd{J}_t)$, the third equality is because $\E{\det(\rnd{Z})} = \det(\E{\rnd{Z}})$ for any $\rnd{Z}$ whose entries have independent noise, and the fourth equality is because $\rnd{J}_t$ is chosen uniformly at random from $\rnd{A}^\colvar_\ell$. Therefore, by Hoeffding's inequality and from the definition of $C(n)$ in \eqref{eq:log n},
\begin{align*}
  P(\bar{\rnd{\mu}}^\rowvar_\ell(I) \notin [\rnd{L}^\rowvar_\ell(I), \rnd{U}^\rowvar_\ell(I)])
  & = P\left(\abs{\hat{\rnd{\mu}}^\rowvar_\ell(I) - \bar{\rnd{\mu}}^\rowvar_\ell(I)} \geq
  \sqrt{C(n) n_\ell^{-1}}\right) \\
  & \leq 2 \exp[-2 C(n) (2 \det_{\max}(d))^{-2}] \\
  & \leq 2 \exp[-2 \log((K^d + L^d) n)] \\
  & \leq 2 K^{- d} n^{-2}\,.
\end{align*}
By the same line of reasoning,
\begin{align*}
  P(\bar{\rnd{\mu}}^\colvar_\ell(J) \notin [\rnd{L}^\colvar_\ell(J), \rnd{U}^\colvar_\ell(J)]) \leq
  2 L^{- d} n^{-2}
\end{align*}
for any stage $\ell$, $d$-column $J \in \Pi_d([L])$, and remaining $d$-rows $\rnd{A}^\rowvar_\ell$ in stage $\ell$. Finally, by the union bound and from the above inequalities,
\begin{align*}
  P(\ccE)
  & \leq \sum_{\ell = 0}^{n - 1} \sum_{I \in \Pi_d([K])}
  P(\bar{\rnd{\mu}}^\rowvar_\ell(I) \notin [\rnd{L}^\rowvar_\ell(I), \rnd{U}^\rowvar_\ell(I)]) +
  \sum_{\ell = 0}^{n - 1} \sum_{J \in \Pi_d([L])}
  P(\bar{\rnd{\mu}}^\colvar_\ell(J) \notin [\rnd{L}^\colvar_\ell(J), \rnd{U}^\colvar_\ell(J)]) \\
  & \leq 4 n^{-1}\,.
\end{align*}
This concludes our proof.

\section{Proof of \cref{lem:regret decomposition}}
\label{sec:regret decomposition lemma}

Let $U = U(I, :)$ and $V = V(J, :)$. Fix any permutations $\pi^\rowvar$ and $\pi^\colvar$ over $[d]$. Then the regret can be decomposed into its row and column components as
\begin{align}
  & \bar{R}(i^\ast, j^\ast) - \bar{R}(i^\ast(I, J), j^\ast(I, J))
  \nonumber \\
  & \quad = \max_{i, j \in [d]} U^\ast(i, :) V^\ast(j, :)\transpose - \max_{i, j \in [d]} U(i, :) V(j, :)\transpose
  \nonumber \\
  & \quad = \max_{i, j \in [d]} U^\ast(i, :) V^\ast(j, :)\transpose -
  \max_{i, j \in [d]} U(\pi^\rowvar(i), :) V^\ast(j, :)\transpose + {}
  \nonumber \\
  & \quad \hspace{0.165in} \max_{i, j \in [d]} U(i, :) V^\ast(j, :)\transpose -
  \max_{i, j \in [d]} U(i, :) V(\pi^\colvar(j), :)\transpose
  \nonumber \\
  & \quad \leq \sum_{i, j = 1}^d \abs{(U^\ast(i, :) - U(\pi^\rowvar(i), :)) V^\ast(j, :)\transpose} +
  \sum_{i, j = 1}^d \abs{(V^\ast(i, :) - V(\pi^\colvar(i), :)) U(j, :)\transpose}\,.
  \label{eq:regret decomposition}
\end{align}
Now we focus on the first term. By definition, $U = Z U^\ast$ for some $Z \in \mathcal{S}_{d, d}$, and therefore
\begin{align*}
  \sum_{i, j = 1}^d \abs{(U^\ast(i, :) - U(\pi^\rowvar(i), :)) V^\ast(j, :)\transpose} \leq
  \sum_{i = 1}^d \abs{E(i, :) - Z(\pi^\rowvar(i), :))} \sum_{j = 1}^d \abs{U^\ast V^\ast(j, :)\transpose}\,,
\end{align*}
where $E = I_d$. By the Cauchy-Schwarz inequality,
\begin{align*}
  \sum_{i = 1}^d \abs{E(i, :) - Z(\pi^\rowvar(i), :))} \sum_{j = 1}^d \abs{U^\ast V^\ast(j, :)\transpose}
  & \leq \sum_{i = 1}^d \normw{E(i, :) - Z(\pi^\rowvar(i), :)}{2}
  \Bigg\|\sum_{j = 1}^d \abs{U^\ast V^\ast(j, :)\transpose}\Bigg\|_2 \\
  & \leq d^\frac{3}{2} \sum_{i = 1}^d \normw{E(i, :) - Z(\pi^\rowvar(i), :)}{2}\,.
\end{align*}
Let $\pi^\rowvar$ be the permutation in \cref{lem:simplex matching}. Then we apply the lemma and get that
\begin{align*}
  d^\frac{3}{2} \sum_{i = 1}^d \normw{E(i, :) - Z(i, :)}{2} \leq
  6 d^3 (1 - \det^2(Z)) =
  6 d^3 \frac{\det^2(U^\ast) - \det^2(U)}{\det^2(U^\ast)}\,.
\end{align*}
The second term in \eqref{eq:regret decomposition} can be bounded analogously as
\begin{align*}
  \sum_{i, j = 1}^d \abs{(V^\ast(i, :) - V(\pi^\colvar(i), :)) U(j, :)\transpose}
  \leq 6 d^3 \frac{\det^2(V^\ast) - \det^2(V)}{\det^2(V^\ast)}\,.
\end{align*}
Now we put both upper bounds together and get that
\begin{align*}
  \bar{R}(i^\ast, j^\ast) - \bar{R}(i^\ast(I, J), j^\ast(I, J)) \leq
  6 d^3 \frac{\Delta^\rowvar_I + \Delta^\colvar_J}{\min \set{\det^2(U^\ast), \ \det^2(V^\ast)}}\,.
\end{align*}
This concludes our proof.

\section{Proof of \cref{lem:elimination}}
\label{sec:elimination lemma}

We only prove the first claim. The other claim can be proved analogously.

Let $\bar{\rnd{\mu}}^\rowvar_\ell(I)$ and $\hat{\rnd{\mu}}^\rowvar_\ell(I)$ be defined as in \cref{lem:concentration}. Let $\bar{\rnd{\mu}}^\rowvar_\ell(I^\ast)$ and $\hat{\rnd{\mu}}^\rowvar_\ell(I^\ast)$ be the corresponding values for $I^\ast$. Then from the definition of our confidence intervals and that event $\cE$ happens,
\begin{align*}
  \rnd{U}^\rowvar_\ell(I)
  & \leq \hat{\rnd{\mu}}^\rowvar_\ell(I) + \frac{\tilde{\Delta}_m}{2} \leq
  \bar{\rnd{\mu}}^\rowvar_\ell(I) + \tilde{\Delta}_m\,, \quad \\
  \rnd{L}^\rowvar_\ell(I^\ast)
  & \geq \hat{\rnd{\mu}}^\rowvar_\ell(I^\ast) - \frac{\tilde{\Delta}_m}{2} \geq
  \bar{\rnd{\mu}}^\rowvar_\ell(I^\ast) - \tilde{\Delta}_m\,.
\end{align*}
To complete the proof, it remains to show that
\begin{align*}
  \bar{\rnd{\mu}}^\rowvar_\ell(I^\ast) - \tilde{\Delta}_m -
  (\bar{\rnd{\mu}}^\rowvar_\ell(I) + \tilde{\Delta}_m) \geq
  0\,.
\end{align*}
This follows from $\bar{\rnd{\mu}}^\rowvar_\ell(I^\ast) - \bar{\rnd{\mu}}^\rowvar_\ell(I) \geq c_{\min} \Delta^\rowvar_I$ and our assumption that $\tilde{\Delta}_m < c_{\min} \Delta^\rowvar_I / 2$.

\section{Technical Lemmas}
\label{sec:technical lemmas}

\begin{lemma}
\label{lem:simplex matching} Let $Z \in \mathcal{S}_{d, d}$ and $E = I_d$. Then there exists a permutation $\pi$ over $[d]$ such that
\begin{align*}
  \sum_{i = 1}^d \normw{E(i, :) - Z(\pi(i), :)}{2} \leq 6 d^\frac{3}{2} (1 - \det^2(Z))\,.
\end{align*}
\end{lemma}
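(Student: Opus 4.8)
The plan is to split the argument into a combinatorial matching step and a determinant step, connected through the scalar quantity $d - \mathrm{tr}(Z)$. First I would choose $\pi$ to maximize $\sum_i Z(\pi(i), i)$ and observe that permuting the rows of $Z$ by $\pi$ leaves both $\det^2(Z)$ and membership in $\mathcal{S}_{d,d}$ unchanged, so I may assume $\pi = \mathrm{id}$ is this optimum, i.e.\ $\mathrm{tr}(Z) = \max_\sigma \sum_i Z(\sigma(i), i)$. Testing a transposition $(k\,l)$ then yields the key local inequality $Z(k,l) + Z(l,k) \le Z(k,k) + Z(l,l)$ for all $k,l$. Writing $t_i = Z(i,i)$ and using that every row lies in the simplex, I get the clean estimate
\begin{align*}
  \normw{E(i,:) - Z(i,:)}{2} \le \normw{E(i,:) - Z(i,:)}{1} = (1 - t_i) + \big(\normw{Z(i,:)}{1} - t_i\big) \le 2(1 - t_i),
\end{align*}
so $\sum_i \normw{E(i,:) - Z(i,:)}{2} \le 2(d - \mathrm{tr}(Z))$. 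It therefore suffices to prove $d - \mathrm{tr}(Z) \le 3 d^{3/2}(1 - \det^2(Z))$.

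For the determinant I would work with the Gram matrix $G = Z Z\transpose \succeq 0$, for which $\det(G) = \det^2(Z)$ and $G(i,i) = \normw{Z(i,:)}{2}^2 \le \normw{Z(i,:)}{1} \le 1$; since $\sum_{j \ne i} Z(i,j)^2 \le \big(\sum_{j \ne i} Z(i,j)\big)^2 \le (1 - t_i)^2$, also $G(i,i) \le t_i^2 + (1 - t_i)^2$. I then split on the diagonal. If every $t_i \ge 1/2$, then $t_i^2 + (1 - t_i)^2 \le t_i$, so Hadamard's inequality gives $\det^2(Z) \le \prod_i G(i,i) \le \prod_i t_i$; because $\prod_i t_i \le t_k$ for each $k$, this yields $1 - \det^2(Z) \ge \max_k (1 - t_k) \ge \frac{1}{d}(d - \mathrm{tr}(Z))$, which already beats the target with room to spare.

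The main obstacle is the complementary case, where some $t_k < 1/2$: here I must show that $\det^2(Z)$ is bounded away from $1$ by an absolute constant, so that the trivial bound $d - \mathrm{tr}(Z) \le d$ closes the gap. This is exactly where trace-maximality is indispensable, since a non-identity permutation matrix has $\det^2 = 1$ yet vanishing diagonal. I would examine the largest entry of row $k$. If $\normw{Z(k,:)}{\infty} \le 1/2$, then $G(k,k) \le \normw{Z(k,:)}{\infty}\normw{Z(k,:)}{1} \le 1/2$ and Hadamard alone gives $\det^2(Z) \le 1/2$. Otherwise the maximal entry $a := Z(k,l) = \normw{Z(k,:)}{\infty} > 1/2$ sits off-diagonal ($l \ne k$, as $t_k < 1/2$); the transposition inequality forces $Z(l,l) \ge Z(k,l) - Z(k,k) \ge 2a - 1$, making rows $Z(k,:)$ and $Z(l,:)$ nearly collinear, with $\inner{Z(k,:)}{Z(l,:)} \ge a(2a-1)$ while $G(k,k) \le a$. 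Applying Fischer's inequality to the $\{k,l\}$ principal block of $G$ (and Hadamard to the complementary block, whose determinant is at most $1$) gives
\begin{align*}
  \det^2(Z) \le G(k,k)\,G(l,l) - \inner{Z(k,:)}{Z(l,:)}^2 \le a - a^2(2a-1)^2,
\end{align*}
and minimizing $1 - a + a^2(2a-1)^2$ over $a \in (1/2, 1]$ shows $1 - \det^2(Z)$ stays above an absolute constant larger than $1/3$. In every subcase $1 - \det^2(Z) \ge \frac{1}{3\sqrt d}$, whence $d - \mathrm{tr}(Z) \le d \le 3 d^{3/2}(1 - \det^2(Z))$, and chaining with the matching bound produces the factor $2 \cdot 3 d^{3/2} = 6 d^{3/2}$. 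The delicate point throughout is keeping both determinant estimates simultaneously available — Hadamard (sharp when no row concentrates) and the Fischer pair bound (sharp when one row concentrates) — so that their better value furnishes a dimension-free lower bound on the volume defect.
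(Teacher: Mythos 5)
Your proof is correct, and it takes a genuinely different route from the paper's. The paper splits on the value of $\abs{\det(Z)}$: when $\abs{\det(Z)} \leq 5/6$ the bound is trivial, and when $\abs{\det(Z)} > 5/6$ it shows---via Hadamard, H\"{o}lder, and a Laplace-expansion contradiction---that every row of $Z$ has a unique dominant entry of value at least $\det^2(Z)$ and that these dominant entries occupy distinct columns; the permutation is read off from that structure and the bound follows entrywise (with the better constant $d^{3/2}$ in that regime). You instead fix the permutation up front as the maximum-weight assignment $\pi = \argmax_\sigma \sum_i Z(\sigma(i), i)$, reduce the left-hand side to $2(d - \mathrm{tr}(Z))$ via $\normw{\cdot}{2} \leq \normw{\cdot}{1}$ and the simplex constraint, and then control $d - \mathrm{tr}(Z)$ by $3 d^{3/2} (1 - \det^2(Z))$ through determinant inequalities on the Gram matrix $G = Z Z\transpose$: Hadamard when every diagonal entry is at least $1/2$ (using $G(i,i) \leq t_i^2 + (1 - t_i)^2 \leq t_i$), and Fischer on the $\set{k, l}$ principal block combined with the transposition-optimality inequality $Z(k,l) + Z(l,k) \leq Z(k,k) + Z(l,l)$ otherwise, which yields $1 - \det^2(Z) \geq \min_{a \in (1/2, 1]} \left(1 - a + a^2 (2a - 1)^2\right) \approx 0.378 > 1/3$. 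I checked the individual steps---the local-optimality inequality, the Gram-diagonal bounds, the estimate $\det(G) \leq G(k,k) G(l,l) - \inner{Z(k,:)}{Z(l,:)}^2 \leq a - a^2(2a-1)^2$, and the final chaining $2 \cdot 3 d^{3/2} = 6 d^{3/2}$---and they all hold. What your route buys is a permutation defined uniformly over all $Z$ (rather than only in the near-nonsingular regime, where the paper's dominant-entry structure exists) and an argument with no proof by contradiction; what it costs is the appeal to Fischer's inequality, which is one notch less elementary than the tools the paper uses, and a case analysis driven by the diagonal of $Z$ rather than by $\det(Z)$ itself.
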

\begin{proof}
Our proof has two parts. First, suppose that $\abs{\det(Z)} \leq 5 / 6$. Then our claim holds trivially because the distance of any two points in the $d$-dimensional simplex is bounded by
\begin{align*}
  \sqrt{d} \leq
  6 \sqrt{d} (1 - \abs{\det(Z)}) \leq
  6 \sqrt{d} (1 - \det^2(Z))\,.
\end{align*}
Now suppose that $\abs{\det(Z)} \geq 5 / 6$. Then any row of $Z$ must contain an entry whose value is at least $\det^2(Z)$. We prove this claim by contradiction. Suppose that $\maxnorm{Z(i, :)} < \det^2(Z)$ for some row $i$. Then
\begin{align*}
  \det^2(Z) \leq
  \prod_{k = 1}^d \normw{Z(k, :)}{2}^2 \leq
  \normw{Z(i, :)}{2}^2 \leq
  \normw{Z(i, :)}{1} \maxnorm{Z(i, :)} <
  \det^2(Z)\,,
\end{align*}
where the first inequality is Hadamard's determinant inequality, the second inequality follows from the observation that $\normw{Z(k, :)}{2} \leq 1$ for all $k \in [d]$, the third inequality is H{\"o}lder's inequality, and the last inequality follows from $\normw{Z(i, :)}{1} \leq 1$ and $\maxnorm{Z(i, :)} < \det^2(Z)$. The above inequality is clearly false, and therefore it must be true that $\maxnorm{Z(i, :)} \geq \det^2(Z)$ for all $i \in [d]$.

Note that any row of $Z$ has only one entry whose value is at least $\det^2(Z)$, because $\det^2(Z) > 1 / 2$. These entries are at distinct columns. We prove this by contradiction. Without loss of generality, let $Z(1, 1) \geq \det^2(Z)$ and $Z(2, 1) > \det^2(Z)$. Then from the Laplace expansion of the first row of $Z$, we have that
\begin{align*}
  \abs{\det(Z)} \leq
  Z(1, 1) \abs{\det(M_{1, 1})} + \sum_{j = 2}^d Z(1, j) \abs{\det(M_{1, j})}\,,
\end{align*}
where $M_{i, j}$ is a $(d - 1) \times (d - 1)$ matrix obtained from matrix $Z$ by removing its $i$-th row and $j$-th column. From $Z(1, 1) > \det^2(Z)$ and $1 - \abs{\det(Z)} \leq \abs{\det(Z)} / 5$, we have that
\begin{align*}
  \sum_{j = 2}^d Z(1, j) \leq
  1 - \det^2(Z) =
  (1 + \abs{\det(Z)}) (1 - \abs{\det(Z)}) \leq
  \frac{2}{5} \abs{\det(Z)}\,.
\end{align*}
Similarly, from $Z(2, 1) > \det^2(Z)$ and $1 - \abs{\det(Z)} \leq \abs{\det(Z)} / 5$, we have that
\begin{align*}
  \abs{\det(M_{1, 1})} \leq
  1 - \det^2(Z) =
  (1 + \abs{\det(Z)}) (1 - \abs{\det(Z)}) \leq
  \frac{2}{5} \abs{\det(Z)}\,.
\end{align*}
Now chain the above three inequalities, and note that $Z(1, 1) \leq 1$ and $\abs{\det(M_{1, j})} \leq 1$ for $j \geq 2$. The result is a contradiction that $\abs{\det(Z)} \leq (4 / 5) \abs{\det(Z)}$, and therefore it must be true that the maximum entries in each row of $Z$ are at distinct columns.

Based on the above, there exists a permutation $\pi$ over $d$ such that
\begin{align*}
  \abs{E(i, i) - Z(\pi(i), i)} \leq 1 - \det^2(Z)
\end{align*}
for any $i \in [d]$. Moreover, because $Z(\pi(i), j) \leq 1 - \det^2(Z)$ for any $j \neq \pi(i)$, we have that
\begin{align*}
  \abs{E(i, j) - Z(\pi(i), j)} \leq 1 - \det^2(Z)\,.
\end{align*}
It follows that
\begin{align*}
  \sum_{i = 1}^d \normw{E(i, :) - Z(\pi(i), :)}{2} \leq
  d^\frac{1}{2} \sum_{i = 1}^d \maxnorm{E(i, :) - Z(\pi(i), :)} \leq
  d^\frac{3}{2} (1 - \det^2(Z))\,.
\end{align*}
This concludes our proof.
\end{proof}

\end{document}